\PassOptionsToPackage{dvipsnames,svgnames,x11names}{xcolor}
\documentclass{article}
\usepackage{arxiv}

\usepackage[utf8]{inputenc} 
\usepackage[T1]{fontenc}    
\usepackage{palatino}
\usepackage{hyperref}       
\usepackage{url}            
\usepackage{booktabs}       
\usepackage{amsfonts}       
\usepackage{nicefrac}       
\usepackage{microtype}      
\usepackage{xcolor}         
\usepackage{hyperref}
\usepackage{url}
\usepackage{eso-pic}
\usepackage{fancyhdr}
\usepackage{graphicx}
\usepackage{tikz}
\usepackage{wrapfig}
\usepackage{bbm}
\usepackage{amsmath}
\usepackage{amssymb}
\usepackage{mathtools}
\usepackage{amsthm}
\usepackage{enumitem}
\usepackage{xspace}
\usepackage{xparse}
\usepackage{natbib}
\usepackage{xcolor}
\usepackage{multirow}
\usepackage{setspace}
\usepackage{enumitem}
\usepackage{color, colortbl}
\usepackage{mathtools}
\usepackage{cleveref}
\theoremstyle{plain}
\newtheorem{theorem}{Theorem}[section]
\usepackage[linesnumbered,ruled,vlined]{algorithm2e}
\usepackage{listings}
\usepackage{color,graphicx}
\usepackage{siunitx}
\usepackage{soul}
\usepackage{graphics} 
\usepackage{bookmark}
\usepackage[dvipsnames, svgnames, x11names]{xcolor}
\usepackage{hyperref} 
\usepackage{textcomp}
\usepackage{multirow}
\usepackage{nth}
\usepackage{indentfirst}
\usepackage{siunitx}
\usepackage{changepage}
\usepackage{bm}
\usepackage{setspace}
\usepackage{multirow}
\usepackage{multicol}
\usepackage{colortbl}
\usepackage{booktabs}
\usepackage{amsmath}
\usepackage{xcolor}
\usepackage{amsthm}
\theoremstyle{remark}

\usepackage{graphicx}
\usepackage{subfigure}
\usepackage{wrapfig}
\usepackage{enumitem}
\usepackage[most]{tcolorbox}
\definecolor{PPOGray}{HTML}{6B7280}      
\definecolor{SPOOrange}{HTML}{D97706}    
\definecolor{GTRTeal}{HTML}{0F766E}      
\definecolor{darkmagenta}{rgb}{0.66, 0.3, 0.3}  
\usepackage{parskip}
\usepackage{titling}

\newcommand{\High}{\textbf{\textcolor{red!75!black}{High \ }}}
\newcommand{\Medium}{\textbf{\textcolor{orange!90!black}{Medium \ }}}
\newcommand{\Low}{\textbf{\textcolor{green!50!black}{Low \ }}}
\newcommand{\Pos}{\textbf{\textcolor{blue!75!black}{Pos. \ }}}
\newcommand{\Neg}{\textbf{\textcolor{orange!85!black}{Neg. \ }}}
\newtcolorbox{takeawaybox}{
  colback=blue!3,
  boxrule=0.8pt,
  arc=1.5mm,
  left=1.2mm,
  right=1.2mm,
  top=1mm,
  bottom=1mm,
  fonttitle=\bfseries,
  title=Take-Away
}

\hypersetup{colorlinks,linkcolor={blue},citecolor={magenta},urlcolor={blue}}

\usepackage{xurl}           
\NewDocumentCommand{\drsh}{ O{0.6em} O{0.5em} O{0.65pt} O{rounded corners=1.6pt} }{%
  \mathrel{%
  \hspace{0.3em}
    \tikz[baseline=-0.6em]{%
      \draw[->, line width=#3, #4] (0,0) -- (0,-#2) -- (#1,-#2);
    }%
  }%
}

\newcommand{\ourmethod}{\texttt{DC-GRPO}\xspace}
\newcommand{\mycolor}{cyan!10}

\newcommand{\QEightB}{Qwen3-8B\xspace}
\newcommand{\QThirtyB}{Qwen3-30B-A3B\xspace}

\title{Reformulate LLM Reinforcement Learning for Efficient Training under Black-box Discrepancy} 

\author{
\textbf{Jiashun Liu}$^{1}$\protect\footnotemark[1] \quad
\textbf{Runze Liu}$^{1}$\protect\footnotemark[1] \quad
\textbf{Xu Wan}$^{2}$\protect\footnotemark[1] \quad
\textbf{Jing Liang}$^{3}$\protect\footnotemark[1] \\
\textbf{Hongyao Tang}$^{3}$ \quad
\textbf{Ling Pan}$^{1}$\protect\footnotemark[2] \\
$^{1}$Hong Kong University of Science and Technology \\
$^{2}$Zhejiang University \quad
$^{3}$Tianjin University
}

\date{}
\begin{document}
\renewcommand{\thefootnote}{\fnsymbol{footnote}}
\setcounter{footnote}{0}

\maketitle

\footnotetext[1]{Equal contribution}
\footnotetext[2]{Corresponding author}

\renewcommand{\thefootnote}{\arabic{footnote}}
\setcounter{footnote}{0}

\begin{abstract}
Reinforcement Learning (RL) has emerged as a pivotal post-training paradigm, yet it frequently suffers from unpredictable sub-optimum performance or even training collapses. Recent findings attribute these failures to a hidden train-inference discrepancy (or mismatch), stemming from the disparate underlying engines and architecture. We find that the training policy can actively self-correct such a discrepancy when provided with an appropriate learning signal. Then, we further empirically identify a \emph{discrepancy tolerance region}: within this region, aggressively narrowing the discrepancy can suppress policy exploration and reduce learning efficiency, whereas outside this region, reducing excessive discrepancy improves optimization consistency and raises the achievable local performance ceiling. According to such findings, we formulate this problem as a \textbf{Discrepancy-Constrained Markov Decision Process} (\texttt{DCMDP}), where reward maximization is coupled with a constraint that aligns training-Inference behavior, achieving stable dual-objective optimization. To adaptively balance performance improvement and discrepancy control, we introduce a Lagrangian relaxation mechanism that dynamically adjusts the relative weight of the two objectives according to the current degree of discrepancy violation. This enables stable dual-objective optimization: the policy is allowed to explore freely within the tolerance region, while being guided back when the discrepancy exceeds the safe boundary. Empirically, \texttt{DCMDP} significantly improves the performance of 8B dense model (\texttt{Qwen-3-8b}) and 30B Mixture-of-Expert model (\texttt{Qwen-3-30bA3b}), and enables a heterogeneous training paradigm, where LLMs can be optimized in high-fidelity training setup while being explicitly aligned for low-cost, resource-constrained inference deployment.
\end{abstract}
\begin{figure}[h]
\centering
\vspace{-0.6cm}
    \includegraphics[width=\linewidth]{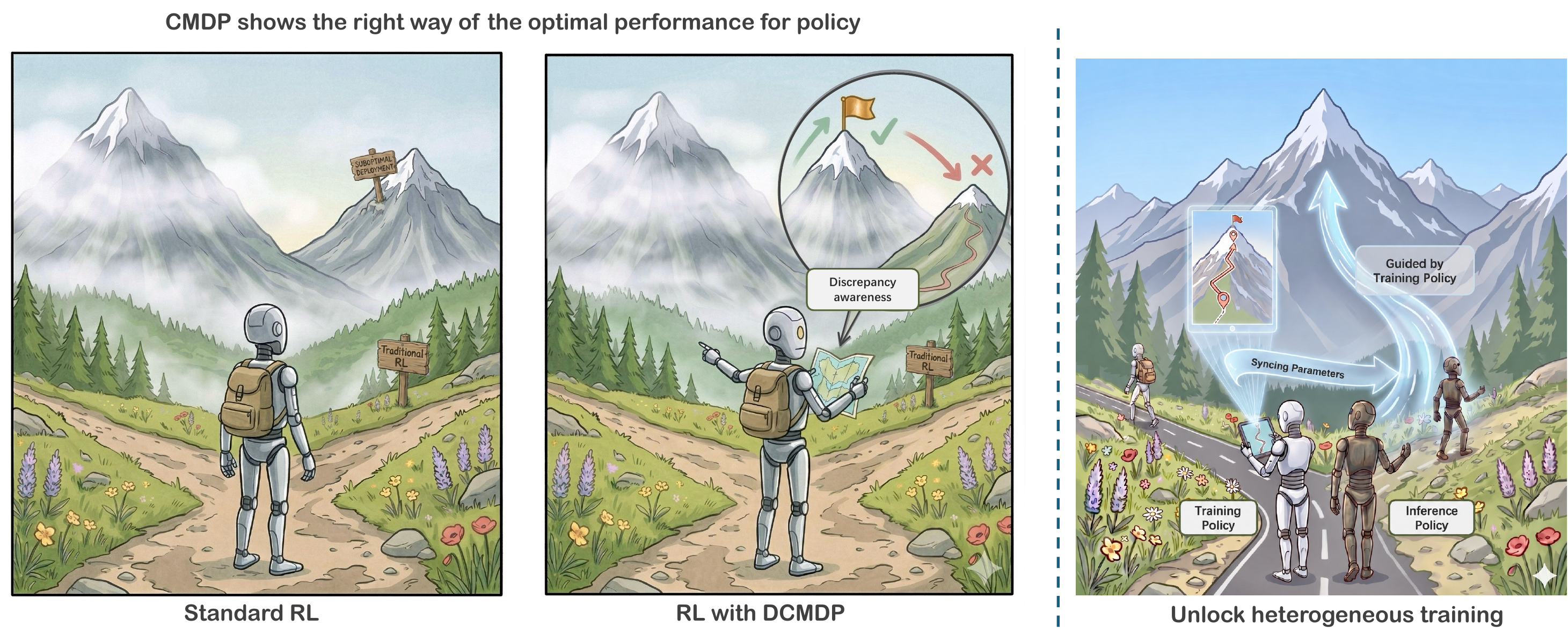}
    \caption{(Left): CMDP guides the training policy to find the optimal solution along the inference policy preferred solution space by injecting real-time discrepancy awareness. (Right): DCMDP enables heterogeneous training paradigm by treating infrastructure and architecture as black boxes.}
    \label{fig:1}
    \vspace{-0.4cm}
\end{figure}

\section{Introduction}
Recent advancements have established Reinforcement Learning (RL) as a pivotal post-training paradigm \citep{DAPO,DeepSeek-R1,Open-Reasoner-Zero}, significantly enhancing the deep reasoning capabilities of Large Language Models (LLMs) and continuously pushing the boundaries of LLM applications in complex real-world scenarios \citep{Qwen2.5-Coder}. However, a persistent and costly challenge plagues the community: RL post-training often suffers abrupt, unpredictable training collapses \citep{Qi2025DefeatingTT,zheng2025stabilizing,team2025every}, leading to substantial waste of computational resources and time. Recent investigations have identified the underlying cause of these failures: a pervasive train-inference mismatch arising from conflicting industrial demands \citep{li2026qurl}. Specifically, to balance the extreme throughput necessary for generation with the stringent numerical precision required for parameter updates, practitioners are often forced to deploy disparate underlying engines and precision formats for the exact same model weights \citep{zhang2026beyond,dong2026probing}. This discrepancy, residing entirely beneath the algorithmic layer, induces a deviation between the action probability distributions of the inference policy and the training policy.

A straightforward approach to mitigate this persistent issue is that merely aligning the precision of training and inference, by uniformly downgrading to FP16, can alleviate training collapses \citep{Qi2025DefeatingTT}. However, this strategy drastically shrinks both the maximum and minimum representable values and exacerbates the risk of gradient explosion due to numerical overflow \citep{zheng2025stabilizing}. Furthermore, precision is merely one facet of the infrastructure puzzle; discrepancies in low-level operator implementations (kernels) and parallelization strategies contribute equally to the mismatch. Attempting to comprehensively align these mechanisms across hardware and software stacks requires exhaustive, bespoke optimizations to handle myriad corner cases, which are notoriously labor-intensive. An alternative research line has emerged at the algorithmic level through heuristic interventions. \citet{li2025trust} found that masking the gradients generated by tokens or sentences with high discrepancy can bolster stability. Nevertheless, these methods harbor a subtle yet fundamental flaw: their primary focus is merely to artificially suppress the impact of noisy data on the training policy. They overlook the ultimate objective of LLM fine-tuning, which is to deliver superior performance on the inference policy deployed in production, not the training policy itself. Motivated by these profound limitations, this paper introduces a paradigm-shifting perspective:

\textcolor{purple}{\textbf{What if we endow the training policy with the inherent capacity to dynamically perceive and actively nullify its discrepancy with the inference policy?}} 

Intuitively, this can be formulated as a dual-objective optimization problem by translating the train-inference discrepancy into a step-wise penalty, thereby jointly maximizing the primary reward and minimizing the discrepancy between two policies. Yet, the core technical hurdle lies in formulating a penalty that harmonizes with the primary correctness reward while remaining robust across long-horizon generation. Through rigorous empirical and theoretical analysis, we identify that the absolute value of the probability difference at the token-level serves as an efficient penalty metric. Crucially, this penalty can be sensitive to the presence of individual tokens that cause extreme discrepancy, enabling fine-grained guidance. Employing this metric as a penalty acts as an endogenous signal, enabling the training policy to actively perceive the mismatch timely and initiate self-correction, which effectively eradicates training collapse. Digging deeper, we empirically observed a counter-intuitive phenomenon: policies exhibit a natural tolerance for minor discrepancy. Imposing aggressive penalties prematurely forces the policy to over-optimize for closing this negligible gap, thereby stifling the efficiency of performance gains. To elegantly exploit this \emph{discrepancy tolerance region}, we propose a fundamental paradigm shift in the LLM RL formulation: transitioning from a standard Markov Decision Process (MDP) to a \emph{Discrepancy-Constrained MDP} (\texttt{DCMDP}). Specifically, penalties are applied exclusively when the policy's behavior exceeds the tolerance boundary, triggering the performance-discrepency dual-objective optimization. Conversely, for negligible deviations within the boundary, the process gracefully degenerates into a pure reward-maximization objective. Furthermore, we introduce an ultra-lightweight operator to adaptively balance the importance of these two objectives. \texttt{DCMDP} not only resolves the policy discrepancy dilemma but also shatters existing performance bottlenecks as shown in \autoref{fig:1} (Left). 

Since our method elegantly frames the discrepancy between the training and deployment (inference) policies as a feedback-driven black-box optimization problem. Whether the deviation originates from low-level infrastructure discrepancies or high-level model architectural designs, our method handles them seamlessly. Even more exhilarating is the paradigm shift this capability unlocks. Our method satisfies a critical, yet unmet, training demand: highly efficient RL optimization for low-cost deployment policies (\autoref{fig:1}(Right)). This framework allows us to leverage the infrastructural advantages of a high-cost training setting (e.g., high-fidelity precision and lossless weight loading) to simulate and adapt to the constrained, low-cost conditions of the deployment setting. Ultimately, it exploits training strategies with high computational performance to find efficient solutions for inference strategies with resource-constrained, distributed applications deployment requirements.

To summarize, our primary contributions are threefold:
 \definecolor{blueviolet}{RGB}{138,43,226}
\newtcolorbox{insightblock}{
  colback=blueviolet!5,   
  colframe=blueviolet!50!black!50!,    
  boxrule=0.5mm,       
  arc=2mm,            
  left=0pt,           
  right=8pt,           
  top=8pt,            
  bottom=8pt,}

\begin{insightblock}
\begin{enumerate}[leftmargin=1.5em]\item We uncovered that incorporating a robust black-box penalty endows the training policy with the capacity to eradicate the pervasive risk of discrepancy-induced collapse autonomously.
\item We pioneer a paradigm shift by reformulating the LLM RL as \texttt{DCMDP} to optimally balance the dual objectives of maximizing reward and minimizing train-inference discrepancy.
\item We enable a heterogeneous training paradigm, paving the way for highly promising, resource-efficient model deployments in the real world.
\end{enumerate}
\end{insightblock}

\section{Preliminaries}
\paragraph{Reinforcement Learning for LLMs.}
The alignment of Large Language Models (LLMs) via RL is fundamentally grounded in the policy gradient theorem. Group Relative Policy Optimization (GRPO)~\citep{GRPO} as a widely used algorithm, it retains the core clipping mechanism of PPO \citep{PPO} but entirely eliminates the Critic network. Instead of relying on a learned value baseline, GRPO samples a group of $G$ responses $\{o_i\}_{i=1}^G$ for the same prompt $q$, and estimates the advantage $A_t$ by normalizing their respective rewards $\{r_i\}_{i=1}^G:
\hat{A}_{i,t} = \frac{r_i - \mathrm{mean}(\{r_i\}_{i=1}^G)}{\mathrm{std}(\{r_i\}_{i=1}^G)}$. This group-level normalization serves as a robust reward-shaping technique, effectively preserving gradient reliability even in sparse reward settings without the need for an external value estimator. The overall objective thus becomes:
\begin{equation}
\begin{aligned}
\mathcal{J}_{\mathrm{GRPO}}(\theta) =\ 
&\mathbb{E}_{\left[ 
  q \sim P(Q),\, \{o_i\}_{i=1}^G \sim \pi_{\theta_{\mathrm{old}}}(\cdot|q)
\right]} \\
&
  \frac{1}{G} \sum_{i=1}^G\, \frac{1}{|o_i|} \sum_{t=1}^{|o_i|}\ 
  \left\{\min\left(
    \gamma_{i,t}(\theta)\, \hat{A}_{i,t},\, 
    \mathrm{clip}\left(
      \gamma_{i,t}(\theta),\, 1{-}\epsilon,\, 1{+}\epsilon
    \right)\, \hat{A}_{i,t}
  \right)
\right\}.
\end{aligned}
\end{equation}
where $\gamma_t(\theta) = \frac{\pi_\theta(o_t|q, o_{<t})}{\pi_{\theta_{\mathrm{old}}}(o_t|q, o_{<t})}$ is the probability ratio. It is used by clipping to restrict the divergence between the current policy $\pi_\theta$ and the old policy $\pi_{\theta_{\mathrm{old}}}$. $\epsilon$ is the clipping range of trust region. 

\section{Untapped Potential: Low-Discrepancy Solutions exist in Decision Space} \label{sec:3.1}
To algorithmically reconstruct the learning objective such that the training policy converges to an ideal parameter landscape that exhibits both high performance and low train-inference discrepancy (a region favored by the deployment policy), a fundamental prerequisite must be met: the decision space of the training policy must inherently contain latent solutions that simultaneously satisfy these dual criteria. To this end, we conduct an empirical case study to validate this underlying hypothesis.

Specifically, we employ GRPO \citep{GRPO}, a widely adopted baseline in LLM RL, to optimize the 
\texttt{Deepseek-Distilled-Qwen-2.5-1.5b} model. We deliberately select mathematical reasoning tasks for evaluation that naturally possess a vast solution space composed of diverse reasoning paths. Furthermore, these tasks pose significant challenges for a 1.5B-parameter model \citep{Liu2025AsymmetricPP}, leaving substantial headroom for improvement and thus serving as a sensitive, precise barometer for RL efficacy. During the experiments, we explicitly control the policy's exploration propensity within the decision space by sweeping the sampling temperature $t=\{0.3,0.7,1.0\}$. As shown in \autoref{fig:2} (Left, Middle), both the asymptotic performance gains and the training stability during RL optimization exhibit a pronounced positive correlation with the degree of exploration.This provides the hypothesis: \emph{there indeed exist valid trajectories characterized by both low inter-policy mismatch and high reward.}

However, merely relying on elevated temperatures to blindly inject sampling diversity is not a fundamental cure, and the training will eventually collapse. To efficiently and stably guide the agent toward these optimal landscapes amidst an astronomically large decision space, an accurate and dense feedback signal is imperative. This necessitates the formulation of an idealized penalty metric.

\begin{figure}[h]
\centering

    \includegraphics[width=\linewidth]{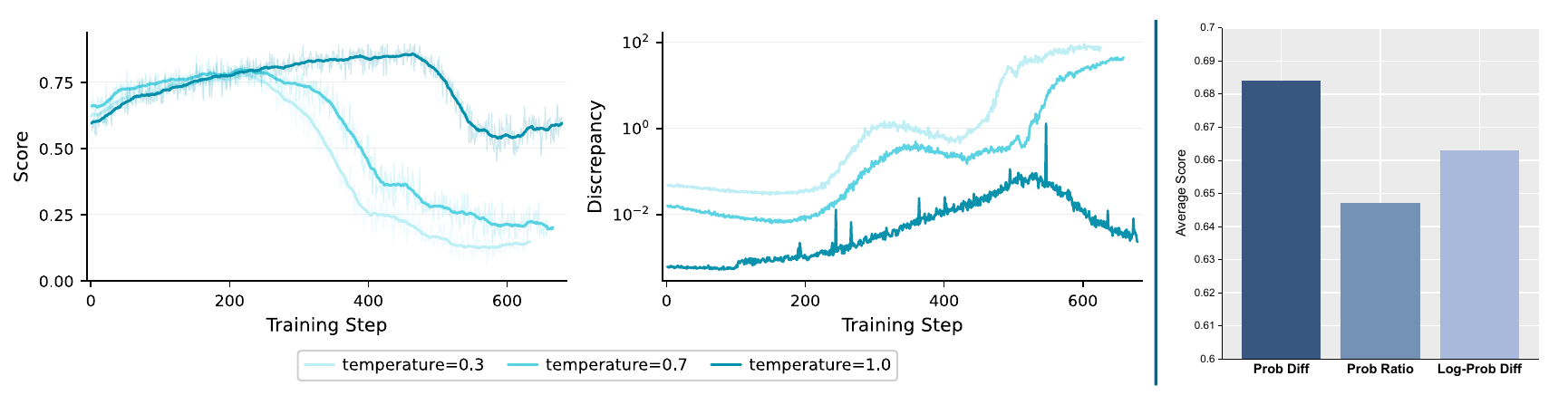}
    \vspace{-0.5cm}
    \caption{(Left):\textbf{Training curve}. Comprehensive performance of GRPO-driven policies with different temperatures on six mathematical scenarios. (Middle): \textbf{Discrepancy ratio}. The corresponding training-inference difference ratio during training. (Right): \textbf{Probability Difference penalty achieves the best performance}. average@32 scores on six benchmarks obtained using different penalties}
    \label{fig:2}
    \vspace{-0.5cm}
\end{figure}
\section{Enabling Autonomous Discrepancy Compensation via a Magic Penalty}
Building upon the empirical verification in \autoref{sec:3.1}, which confirms the latent existence of optimization landscapes satisfying both high task performance and low train-inference discrepancy, this section aims to formulate a reliable mismatch-aware penalty for exploration guidance. Given the same model weights $\theta_{\text{old}}$ and the same generated trajectory $\tau=\{o_1,\dots,o_T\}$, the training and inference backends may instantiate two slightly different effective policy distributions, denoted as $\pi_{\theta_{\text{old}}}^{\text{Train}}(\cdot|q,o_{<t})$ and $\pi_{\theta_{\text{old}}}^{\text{Inf}}(\cdot|q,o_{<t})$, respectively. The central question is therefore not merely whether such a discrepancy exists, but how it should be measured and penalized such that the resulting signal faithfully reflects deployment-relevant mismatch while remaining compatible with stable reinforcement learning.

We begin with three practical token-level discrepancy signals, where
$p_t^{\text{Train}}=\pi_{\theta_{\text{old}}}^{\text{Train}}(o_t|q,o_{<t})$
and
$p_t^{\text{Inf}}=\pi_{\theta_{\text{old}}}^{\text{Inf}}(o_t|q,o_{<t})$:
\begin{equation}
\begin{aligned}
\texttt{Probability Difference}\;:\qquad
\delta_t^{\mathrm{diff}}
&=
p_t^{\mathrm{Train}} - p_t^{\mathrm{Inf}}
\\
\texttt{Probability Ratio}\;:\qquad
\delta_t^{\mathrm{ratio}}
&=
\frac{p_t^{\mathrm{Train}}}{p_t^{\mathrm{Inf}}}
\\
\texttt{Log-Probability Difference}\;:\qquad
\delta_t^{\mathrm{log}}
&=
\log p_t^{\mathrm{Train}} - \log p_t^{\mathrm{Inf}}
\end{aligned}
\end{equation}

\vspace{-0.2cm}
Although all three quantities vanish when the two backends agree, they induce substantially different regularization geometries. The probability difference $\delta_t^{\mathrm{diff}}$ constrains absolute drift in token probability. As a result, increasing the probability of a low-probability token may incur only a small penalty as long as the absolute change remains limited, even if the relative amplification is large. This makes the induced regularization highly plastic, but also potentially tail-seeking: when a low-probability token receives a large advantage, the optimizer can aggressively shift probability mass toward it. In contrast, the probability ratio $\delta_t^{\mathrm{ratio}}$ constrains multiplicative drift. For tokens with small $p_t^{\mathrm{Inf}}$, even a modest absolute increase can correspond to a large ratio change, leading to a much stronger penalty. Ratio-based regularization therefore more strictly preserves the support of the deployed inference policy and discourages abrupt amplification of tail tokens. 
However, this conservatism can also become restrictive: if genuinely beneficial tokens initially reside in the low-probability region of the inference policy, the ratio penalty may suppress the very adaptations needed to improve performance.
 $\delta_t^{\mathrm{log}}$ provides an intermediate geometry. In the small-drift regime, when $p_t^{\text{Train}}$ remains close to $p_t^{\text{Inf}}$, we have $
\log \frac{p_t^{\text{Train}}}{p_t^{\text{Inf}}}
=
\log\left(1+\frac{p_t^{\text{Train}}-p_t^{\text{Inf}}}{p_t^{\text{Inf}}}\right)
\approx
\frac{p_t^{\text{Train}}-p_t^{\text{Inf}}}{p_t^{\text{Inf}}},
$ which shows that $\delta_t^{\text{log}}$ locally behaves like a ratio-style penalty and therefore inherits its stabilizing effect against relative mismatch. Outside this local regime, however, the logarithmic transformation grows more slowly than the raw ratio, avoiding excessive punishment of all non-negligible deviations. Theoretically, we further analyze how these three discrepancy parameterizations induce different local optimal policies, and use a one-state toy experiment to illustrate how advantage-weighted updates reshape the resulting policy distributions; details are deferred to Appendix~\ref{app:optimal_policy_toy}.

We empirically compare these three shaped discrepancy penalties under identical post-training configurations. As illustrated in \autoref{fig:2} (Right), $\delta_t^{\text{dif}}$ achieves the strongest overall performance. It consistently improves the primary verifiable reward while suppressing train-inference discrepancy. This empirical ordering indicates that effective discrepancy-aware regularization should correct deployment-relevant drift without overly constraining reward-driven exploration. $\delta_t^{\mathrm{diff}}$ offers a more plastic penalty: it narrows train-inference discrepancy while preserving the policy’s ability to shift probability mass toward useful reasoning trajectories.

\vspace{-0.1cm}
\paragraph{The Choice of Quantization Granularity}A natural alternative is to first aggregate the discrepancy over the entire generated response and then apply the penalty at the sequence level. Although such sequence-level calculations are structurally aligned with response-level rewards, they are intrinsically coarse. In long-horizon reasoning tasks, train-inference discrepancy is often highly sparse and token-localized: a small number of tokens may exhibit substantial backend disagreement and exert a disproportionate influence on subsequent generation quality. Once averaged over the whole sequence, however, these critical tokens can be diluted by a large number of benign tokens with negligible discrepancy \citep{liu2025bingo}. Therefore, we instead compute the discrepancy penalty at the token level. This design preserves the fine-grained mismatch structure along the trajectory. Consequently, tokens with large train-inference mismatch receive direct corrective feedback.

\vspace{-0.1cm}
\paragraph{Dual-objective RL}
We incorporate the token-level log-probability discrepancy into the reward-maximized RL as an additional optimization objective. In practice, we use GRPO as an example. Given a prompt $q$ and a group of responses $\{o_i\}_{i=1}^{G}$ sampled from the old policy, the standard GRPO objective encourages reward-driven policy improvement through the clipped importance ratio. In contrast, our formulation additionally penalizes the discrepancy measured on each generated token, thereby guiding the learned policy toward solutions that are not only reward-favorable but also faithfully supported by the inference policy. Formally, the per-token penalty is
\begin{equation}
\delta_{i,t}
=
\left|
\pi_{\theta_{\mathrm{old}}}^{\mathrm{Train}}(o_{i,t}\mid q,o_{i,<t})
-
\pi_{\theta_{\mathrm{old}}}^{\mathrm{Inf}}(o_{i,t}\mid q,o_{i,<t})
\right|.
\end{equation}
Since the group-relative advantage $\hat{A}_{i,t}$ of GRPO is broadcast to every token in the response, we directly inject the discrepancy penalty at the advantage level, yielding a dual-objective advantage $
\textcolor{purple}{\hat{A}_{i,t}^{\mathrm{Dual}}}
=
\hat{A}_{i,t} - \textcolor{purple}{\delta_{i,t}},
$
and the Dual-GRPO objective becomes:
\begin{equation}
\begin{aligned}
\mathcal{J}_{\mathrm{\textcolor{purple}{Dual\text{-GRPO}}}}(\theta) \
&=\mathbb{E}_{q \sim P(Q),\, \{o_i\}_{i=1}^G \sim \pi_{\theta_{\mathrm{old}}}^{\mathrm{Inf}}(\cdot|q)} \\
&\frac{1}{G} \sum_{i=1}^G\, \frac{1}{|o_i|} \sum_{t=1}^{|o_i|}\
  \min\left(
    \gamma_{i,t}(\theta)\, \textcolor{purple}{\hat{A}_{i,t}^{\mathrm{Dual}}},\,
    \mathrm{clip}\left(
      \gamma_{i,t}(\theta),\, 1-\varepsilon,\, 1+\varepsilon
    \right)\, \textcolor{purple}{\hat{A}_{i,t}^{\mathrm{Dual}}}
  \right).
\end{aligned}
\end{equation}

\section{Transitioning to Discrepancy Constrained-MDP for Stable Post-training}

\begin{wrapfigure}{r}{0.48\textwidth}
\centering
\vspace{-0.8cm}
    \includegraphics[width=\linewidth]{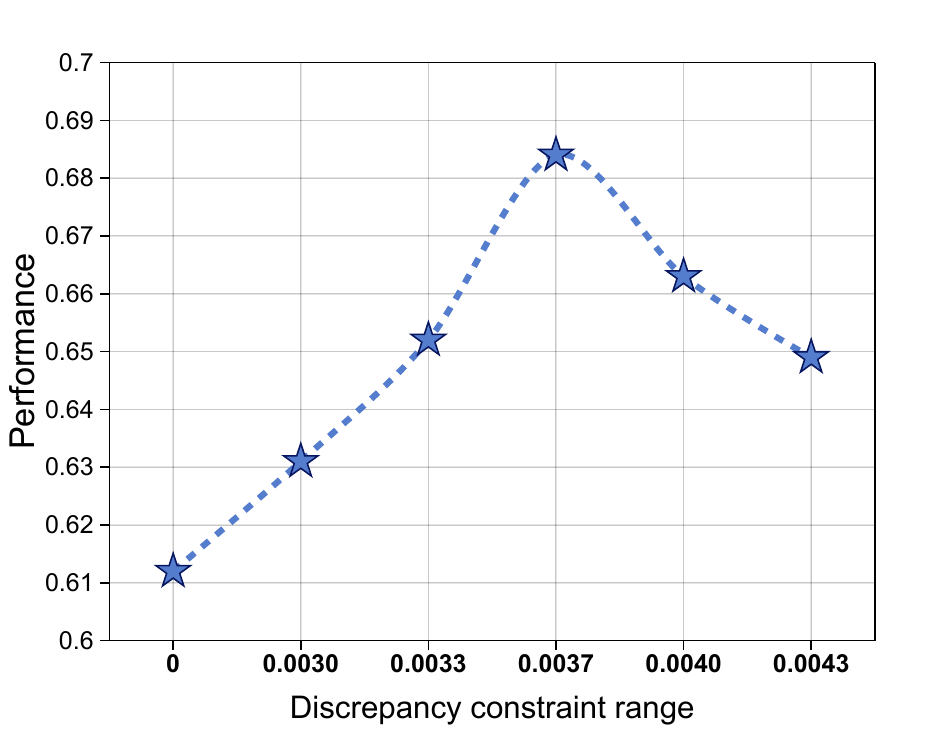}
    \vspace{-0.9cm}
    \caption{\textbf{Performance under various tolerance ranges.} As the region expands, the performance gradually increases, but a loose range leads to a degradation. Use the setup in \autoref{fig:2}.}
    \label{fig:CMDP_2}
    \vspace{-0.6cm}
\end{wrapfigure}

While the reliable performance-discrepancy dual-objective formulation (Dual-GRPO) eradicates discrepancy-induced training collapses, we find a significant limitation of such a learning objective from a practice perspective. Compared to the naive single-objective GRPO, Dual-GRPO suffers from severely degraded learning efficiency during the early stages of training and ultimately converges to a sub-optimal performance ceiling. We hypothesize that this limitation stems from over-regularization: when the inter-policy discrepancy is initially minute, an indiscriminate and strict penalty forces the agent to prematurely prioritize alignment with the inference policy at the expense of its primary objective, i.e., maximizing the verifiable task reward, thereby stifling the overall learning momentum. This observation inevitably begs the question: Does the training policy possess an inherent resistance, or a discrepancy tolerance region, against minor training-inference deviations?

\begin{wrapfigure}{r}{0.48\textwidth}
\centering
\vspace{-0.55cm}
    \includegraphics[width=\linewidth]{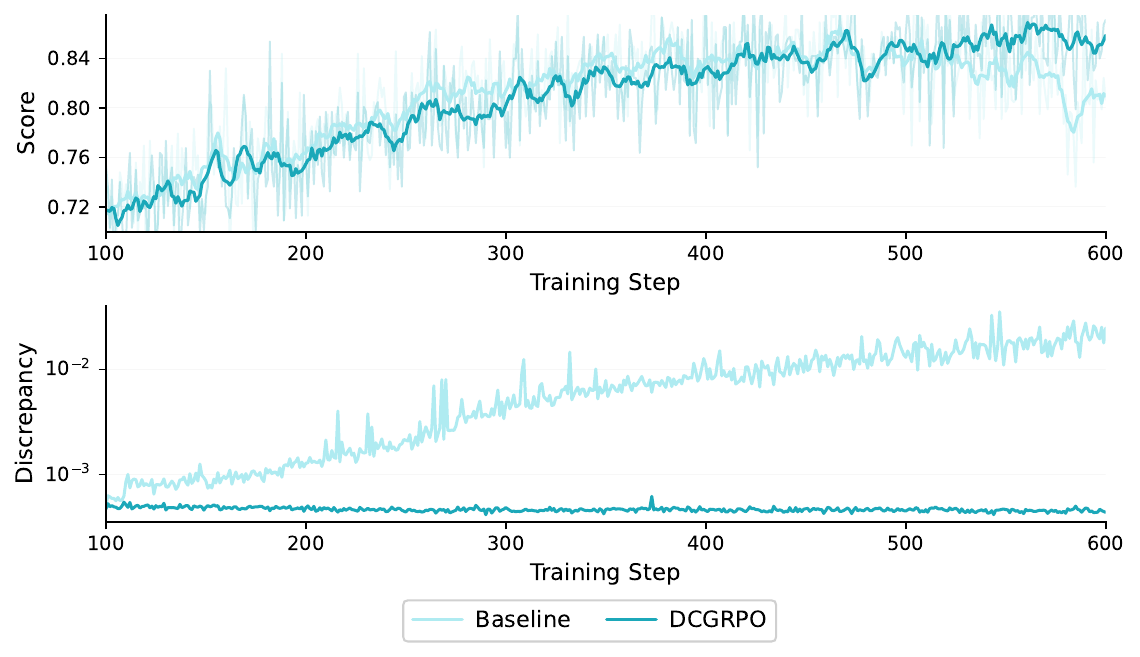}
    \caption{Compared to the naive GRPO, \texttt{DC-GRPO} controls the train-inference discrepancy at a small value, achieves more stable training and overcomes collapse, ultimately achieving better progressive gains. The experiments use the same setting in \autoref{fig:2}.}
    \label{fig:5}
    \vspace{-0.3cm}
\end{wrapfigure} 

To validate this conjecture, we introduce a relaxed, threshold-based truncation to the penalty, activating the discrepancy feedback only when the deviation surpasses a predefined margin $c$. As depicted in \autoref{fig:CMDP_2}, implementing this leniency improves the policy's learning efficiency and performance. In contrast, punishments that are too lenient can have a negative effect. This evidence validates the existence of the \textit{discrepancy tolerance region}.

Driven by these empirical observations, it becomes evident that a naive, unconstrained scalarization of the two objectives is highly inefficient. Instead, a more structurally sound and mathematically elegant modeling paradigm for LLM RL emerges: the \textit{Discrepancy-Constrained Markov Decision Process} (\texttt{DCMDP}). The core innovation of the \texttt{DCMDP} lies in the introduction of an explicit discrepancy constraint in addition to the reward-maximization objective. Specifically, it constrains the black-box train-inference discrepancy within the empirically identified \textit{discrepancy tolerance region}, allowing harmless microscopic deviations while preventing large backend-induced distributional shifts in either direction. In this way, the \texttt{DCMDP} preserves the exploration flexibility of the training policy inside the tolerated region. Formally, building upon the token-level absolute probability discrepancy, the \texttt{DCMDP} objective can be formulated as:
\begin{align}
    \max_{\theta} \quad
    & \mathcal{J}(\theta)
    =
    \mathbb{E}_{\tau \sim \pi_\theta^{\mathrm{Train}}}
    \left[
    \sum_{t=1}^{|\tau|}
    r(o_t \mid q,o_{<t})
    \right] \\
    \text{s.t.} \quad
    & \frac{1}{|\tau|}\sum_{t=1}^{|\tau|}\left|
\pi_{\theta_{\mathrm{old}}}^{\mathrm{Train}}
(o_t \mid q,o_{<t})
-
\pi_{\theta_{\mathrm{old}}}^{\mathrm{Inf}}
(o_t \mid q,o_{<t})
\right| \le c,
    \quad
    \forall \tau \sim \pi_\theta^{\mathrm{Train}} .
    \label{eq:constraint}
\end{align}

\begin{wrapfigure}{r}{0.48\textwidth}
\centering
\vspace{-0.55cm}
    \includegraphics[width=\linewidth]{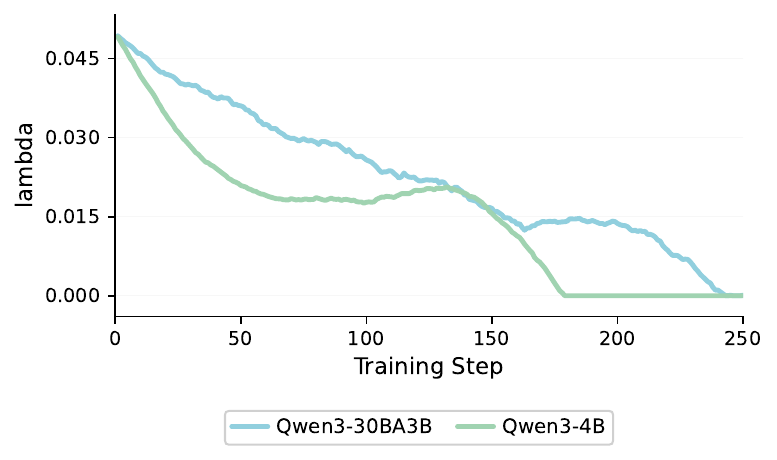}
    \vspace{-0.7cm}
    \caption{Depending on the adaptive optimization ability of the Lagrangian operator, we find that its behavior for different models is different, which is hard to achieve by heuristics.}
    \label{fig:4}
    \vspace{-0.8cm}
\end{wrapfigure}

$r(\cdot)$ denotes the verifiable reward, and $c \ge 0$ specifies the radius of the symmetric two-bounded discrepancy tolerance region. The discrepancy penalty only triggers when the train-inference discrepancy remains within a centered tolerance interval, whereas deviations beyond either boundary, i.e., both $\pi^{\mathrm{Train}} > \pi^{\mathrm{Inf}}$ and $\pi^{\mathrm{Train}} < \pi^{\mathrm{Inf}}$, are explicitly restricted through the absolute-value operator.

\paragraph{Solving \texttt{DCMDP} via Lagrangian Relaxation.} 
We incorporate the Lagrangian Relaxation method to achieve an adaptive and smooth equilibration of the dual objectives \citep{achiam2017constrained,ji2024omnisafe}. Specifically, during RL training, a Lagrangian multiplier $\lambda$ acts as a dynamic penalty weight, updated iteratively based on the average discrepancy measured across the sampled batch at each time step. When the inter-policy divergence spikes and violates the tolerance boundary, $\lambda$ progressively scales up the penalty strength to forcefully pull the policy back into the safe zone. Conversely, when the discrepancy drops below the threshold, $\lambda$ gently anneals toward zero. This responsive mechanism is associated with a dynamic buffer for calculation. Ultimately, using GRPO as the algorithmic backbone, our framework for solving the \texttt{DCMDP} can be cast as a Min-Max optimization problem:
\begin{equation}\label{eq:lagrangian_obj}
    \max_{\theta} \min_{\lambda \in [\lambda_{\min}, \lambda_{\max}]} \mathcal{L}(\theta, \lambda) = \mathcal{J}_{\mathrm{GRPO}}(\theta) - \lambda \left( \mathbb{E}_{q, \{o_i\}_{i=1}^G,\,t} \left[ \delta_{i,t} \right] - c \right),
\end{equation}
where $\delta_{i,t}=\left|\pi_{\theta_{\mathrm{old}}}^{\mathrm{Train}}(o_{i,t}\mid q,o_{i,<t})-\pi_{\theta_{\mathrm{old}}}^{\mathrm{Inf}}(o_{i,t}\mid q,o_{i,<t})\right|$ is the token-level absolute probability discrepancy, $\lambda$ is a learnable Lagrangian multiplier bounded within $[\lambda_{\min},\lambda_{\max}]$ that adaptively controls the penalty strength, and $c$ specifies the discrepancy tolerance boundary. By the linearity of expectation and the policy gradient theorem, optimizing this Lagrangian objective with respect to the policy parameters $\theta$ is mathematically equivalent to dynamically shaping the token-level advantage. In our practical implementation, at training iteration $k$, the penalty is injected directly into the group-normalized advantage:
\begin{align}
\label{eq:lagrangian_reward}
    \textcolor{Mulberry}{\hat{A}_{i,t}^{\texttt{DCMDP}}} &= \hat{A}_{i,t} - \textcolor{Mulberry}{\underbrace{\lambda_k \cdot \delta_{i,t}}_{\text{Adaptive Penalty}}},\quad\text{where}\quad \hat{A}_{i,t} = \frac{r_i^{\text{task}} - \mathrm{mean}(\{r_i^{\text{task}}\}_{i=1}^G)}{\mathrm{std}(\{r_i^{\text{task}}\}_{i=1}^G)},\\
    \mathcal{J}_{ \textcolor{Mulberry}{\texttt{DC-GRPO}}}(\theta) &=\
\mathbb{E}_{q \sim P(Q),\, \{o_i\}_{i=1}^G \sim \pi_{\theta_{\mathrm{old}}}^{\mathrm{Inf}}(\cdot|q)} \\
& \frac{1}{G} \sum_{i=1}^G\,
  \frac{1}{|o_i|} \sum_{t=1}^{|o_i|}
  \min\left(
    \gamma_{i,t}(\theta)\, \textcolor{Mulberry}{\hat{A}_{i,t}^{\texttt{DCMDP}}},\,
    \mathrm{clip}\left(
      \gamma_{i,t}(\theta),\, 1{-}\epsilon_{\text{low}},\, 1{+}\epsilon_{\text{high}}
    \right)\, \textcolor{Mulberry}{\hat{A}_{i,t}^{\texttt{DCMDP}}}
  \right).
\end{align}
The policy $\pi_\theta$ is then updated via the standard GRPO clipped surrogate using the penalized advantage $\hat{A}_{i,t}^{\texttt{DCMDP}}$. Simultaneously, to solve the minimization problem with respect to the dual variable, the multiplier $\lambda$ is updated via dual gradient ascent at the end of each iteration based on the batch-average token-level discrepancy:
\begin{equation}\label{eq:lambda_update}
    \lambda_{k+1} = \Pi_{[\lambda_{\min},\lambda_{\max}]}\!\left( \lambda_k + \eta_\lambda \left( \frac{1}{\sum_{i=1}^{B}|o_i|} \sum_{i=1}^{B}\sum_{t=1}^{|o_i|} \delta_{i,t} - c \right) \right),
\end{equation}

where $\eta_\lambda$ is the dual learning rate, $B$ is the number of trajectories in the current batch, and $\Pi_{[\lambda_{\min},\lambda_{\max}]}(\cdot)$ is the projection onto the admissible range of the multiplier.

\section{Experiments}
\label{sec:experiments}

\subsection{Setup}
\label{subsec:exp_setup}

\paragraph{Models and Baselines.}

We conduct experiments on two open-source LLMs spanning both dense and Mixture-of-Experts (MoE) architectures: \QEightB~\citep{Qwen3} and \QThirtyB~\citep{Qwen3} (30B total parameters with 3B activated per token). These two models cover substantially different training regimes, e.g., FSDP2 for the dense model and Megatron-LM~\citep{Megatron-LM} with tensor/expert parallelism for the MoE model, allowing us to comprehensively evaluate the robustness of \ourmethod across infrastructures that are prone to different train-inference discrepancy patterns. We compare \ourmethod against the vanilla GRPO~\citep{GRPO} baseline, which shares the identical training pipeline, data, and hyperparameters except for the discrepancy-constrained penalty, thereby isolating the effect of the proposed constraint.

\paragraph{Evaluation.}

We evaluate all methods on six widely-used mathematical reasoning benchmarks: AIME24~\citep{AIME24}, AIME25~\citep{AIME25}, AMC23~\citep{AMC23}, MATH-500~\citep{PRM800K}, Minerva Math~\citep{Minerva-Math}, and OlympiadBench~\citep{OlympiadBench}. Evaluation uses vLLM 0.11.0~\citep{vLLM} with a maximum generation length of 32,768 tokens, temperature 0.7, and top-$p$ of 1.0. We report avg@$K$ with $K=32$ for AIME24/25 and AMC23, and $K=4$ for MATH-500, Minerva, and OlympiadBench.

\paragraph{Implementation Details.}
\label{para:implement}
We train all methods on the DAPO-Math-17K~\citep{DAPO} dataset using verifiable mathematical rewards computed by \texttt{math-verify}. The maximum response length is set to 8,192 tokens. For each prompt we sample $G=8$ rollouts with temperature 1.0 and top-$p$ of 1.0. The train batch size is 64 prompts, with a PPO mini-batch size of 16. We use AdamW with learning rate $1\!\times\!10^{-6}$, betas $(0.9, 0.95)$, weight decay $0$, and optimizer epsilon $1\!\times\!10^{-15}$ following~\citep{Qi2025DefeatingTT}. We use $\varepsilon_{\text{high}}{=}0.24$ and disable the KL loss. The dense \QEightB is trained on $1\!\times\!8$ NVIDIA GPUs under the FSDP2 backend, while the MoE \QThirtyB is trained on $4\!\times\!8$ GPUs under the Megatron backend (TP$=$2, EP$=$8, rollout TP$=$4) with BF16 weights.

The discrepancy penalty is instantiated as the token-level absolute probability difference $\delta_{i,t}=|\pi_{\theta_{\mathrm{old}}}^{\mathrm{Train}}-\pi_{\theta_{\mathrm{old}}}^{\mathrm{Inf}}|$ computed from the rollout (vLLM, inference backend) and recomputation (training backend) log-probabilities on the exact same token sequences. The Lagrangian multiplier $\lambda$ is initialized at $\lambda_0=0.1$ for \QEightB and $\lambda_0=0.05$ for \QThirtyB, with dual learning rate $\eta_\lambda=1.0$. The discrepancy tolerance budget $c$ is set to $0.0037$ for \QEightB and $0.0050$ for \QThirtyB, reflecting the empirically higher baseline mismatch of MoE rollouts due to expert routing non-determinism. The penalty is injected on the token-level advantage as in Eq.~\eqref{eq:lagrangian_reward}, and $\lambda$ is updated once per training step.

\subsection{Main Results}

\paragraph{BF16 main results.}
Table~\ref{tab:main_bf16} reports mathematical reasoning results for \QEightB and \QThirtyB under the setting of BF16. Across both backbones, \ourmethod consistently outperforms the GRPO baseline on every benchmark and improves the overall average performance, demonstrating that enforcing discrepancy-tolerance constraints not only preserves, but can even enhance the primary reward-driven optimization signal. These results suggest that DCMDP provides a unified framework for mitigating both architecture-induced and infrastructure-induced instability. This highlights an algorithmic advantage of our approach: instead of explicitly tracing and modeling the underlying source of train-inference mismatch, \texttt{DCMDP} treats the mismatch as a black-box optimization.

\begin{table*}[!ht]
\centering
\caption{Evaluation results on mathematical benchmarks under the BF16 training/inference regime. The results of \ourmethod are \colorbox{\mycolor}{shaded} and the highest values are \textbf{bolded}.}
\vspace{0.1cm}
\resizebox{\textwidth}{!}{
\begin{tabular}{lccccccc}
\toprule
{\textbf{Method}} & {\textbf{AIME24}} & {\textbf{AIME25}} & {\textbf{AMC23}} & {\textbf{MATH-500}} & {\textbf{Minerva}} & {\textbf{Olympiad}} & {\textbf{Avg.}} \\
\midrule
\textbf{\QEightB}  &  &  &  &  &  &  &  \\
$\drsh$ GRPO       & 58.8 & 40.0 & 90.4 & 93.7 & \textbf{51.0} & 70.1 & 67.3 \\
\rowcolor{cyan!10} $\drsh$ \ourmethod & \textbf{63.2} & \textbf{52.5} & \textbf{91.8} & \textbf{94.2} & 50.1 & \textbf{74.4} & \textbf{71.0} \\
\midrule
\textbf{\QThirtyB} &  &  &  &  &  &  &  \\
$\drsh$ GRPO       & 60.2 & 43.6 & 91.4 & 95.5 & 52.0 & 71.3 & 69.0 \\
\rowcolor{cyan!10} $\drsh$ \ourmethod & \textbf{66.8} & \textbf{49.1} & \textbf{91.9} & \textbf{95.8} & \textbf{52.9} & \textbf{74.5} & \textbf{71.8} \\
\bottomrule
\end{tabular}%
}
\label{tab:main_bf16}%
\end{table*}%

\paragraph{Heterogeneous FP8 deployment.}
A central motivation of \ourmethod is to enable \emph{heterogeneous} training: the policy is updated under a high-fidelity training backend (BF16) while being optimized for a low-cost deployment backend (FP8 vLLM). Table~\ref{tab:main_fp8} reports the results in this heterogeneous setting, where GRPO suffers noticeable degradation or outright collapse on several benchmarks due to the amplified train-inference gap, whereas \ourmethod preserves (and often exceeds) the BF16-only performance by natively closing the backend gap. This validates the core claim of our method: by framing the train-inference gap as a black-box feedback signal, \ourmethod turns infrastructure-level heterogeneity from a liability into a controllable optimization knob.

\begin{table*}[!ht]
\centering
\caption{Evaluation results on mathematical benchmarks under the heterogeneous training (BF16) + FP8 deployment regime. The results of \ourmethod are \colorbox{\mycolor}{shaded} and the highest values are \textbf{bolded}.}
\vspace{0.2cm}
\resizebox{\textwidth}{!}{
\begin{tabular}{lccccccc}
\toprule
{\textbf{Method}} & {\textbf{AIME24}} & {\textbf{AIME25}} & {\textbf{AMC23}} & {\textbf{MATH-500}} & {\textbf{Minerva}} & {\textbf{Olympiad}} & {\textbf{Avg.}} \\
\midrule
\textbf{\QEightB} &  &  &  &  &  &  &   \\
$\drsh$ GRPO      &  53.2 & 35.2 & 89.5 & 92.9 & 50.0 & 67.4 & 64.7 \\
\rowcolor{cyan!10} $\drsh$ \ourmethod & \textbf{55.0} & \textbf{44.1} & \textbf{90.2} & \textbf{94.4} & \textbf{54.0} & \textbf{70.9} & \textbf{68.1} \\
\midrule
\textbf{\QThirtyB} \\
$\drsh$ GRPO & 52.4 & 37.5 & 87.3 & 93.8 & 49.6 & 70.5 & 65.2 \\
\rowcolor{cyan!10} $\drsh$ \ourmethod & \textbf{56.6} & \textbf{41.2} & \textbf{91.8} & \textbf{94.0} & \textbf{50.5} & \textbf{72.0} & \textbf{67.7} \\
\bottomrule
\end{tabular}%
}
\label{tab:main_fp8}%
\end{table*}%

\subsection{Ablation Study}

To disentangle the contribution of each design choice of \ourmethod, we conduct ablations on \QEightB by varying the three core hyperparameters of the Lagrangian relaxation. All other training settings are kept identical to the main experiments.

\paragraph{Initial value of $\lambda$.}
We sweep $\lambda_0\in\{0.05, 0.1, 0.2\}$. Overall, \ourmethod is not very sensitive to $\lambda_0$: all three settings converge to competitive and stable performance, because the dual variable $\lambda$ is updated online and quickly adjusts to the prevailing discrepancy level. The remaining differences are confined to early training: an overly small $\lambda_0$ (e.g., $0.05$) provides only a weak initial penalty and therefore offers limited early containment of backend-induced spikes, whereas an overly large $\lambda_0$ (e.g., $0.2$) over-regularizes the policy at the very beginning and slows down early-stage reward maximization. $\lambda_0{=}0.1$ achieves a good middle-ground and is used as our default. As shown in Table~\ref{tab:ablation_lambda_init}, the final-performance gap across settings is small.

\begin{table*}[!ht]
\centering
\caption{Ablation on the initial value of $\lambda$ for \QEightB. All other settings follow the main experiments ($c{=}0.0037$, $\eta_\lambda{=}1.0$). Results are \colorbox{\mycolor}{shaded} for the default.}
\vspace{0.2cm}
\resizebox{\textwidth}{!}{
\begin{tabular}{lccccccc}
\toprule
{$\lambda_0$} & {\textbf{AIME24}} & {\textbf{AIME25}} & {\textbf{AMC23}} & {\textbf{MATH-500}} & {\textbf{Minerva}} & {\textbf{Olympiad}} & {\textbf{Avg.}} \\
\midrule
$0.05$ & 59.8 & 43.8 & 91.5 & 94.7 & 52.0 & 71.8 & 68.9 \\
\rowcolor{cyan!10} $0.10$ (default) & 63.2 & 52.5 & 91.8 & 94.2 & 50.1 & 74.4 & 71.0 \\
$0.20$ & 64.9 & 45.3 & 92.8 & 95.4 & 52.0 & 72.0 & 70.4 \\
\bottomrule
\end{tabular}%
}
\label{tab:ablation_lambda_init}%
\end{table*}%

\paragraph{Dual learning rate of $\lambda$.}
We compare $\eta_\lambda\in\{0.5, 1.0, 1.5\}$. Similar to $\lambda_0$, \ourmethod remains robust to a broad range of dual learning rates. Only when $\eta_\lambda$ is very small (e.g., well below $0.5$) does the multiplier react too slowly to discrepancy spikes, allowing constraint violations to accumulate and noticeably hurting final performance. Moderately larger values such as $\eta_\lambda{=}1.5$ still work well: although larger $\eta_\lambda$ in principle induces faster oscillations of $\lambda$ around the tolerance boundary, the projection onto $[\lambda_{\min},\lambda_{\max}]$ and the smoothing provided by the batch-averaged discrepancy keep the training stable in practice. $\eta_\lambda{=}1.0$ is chosen as our default for its clean convergence behavior. Detailed numbers are shown in Table~\ref{tab:ablation_lambda_lr}.

\begin{table*}[!ht]
\centering
\caption{Ablation on the dual learning rate $\eta_\lambda$ for \QEightB. All other settings follow the main experiments ($c{=}0.0037$, $\lambda_0{=}0.1$). Results are \colorbox{\mycolor}{shaded} for the default.}
\vspace{0.2cm}
\resizebox{\textwidth}{!}{
\begin{tabular}{lccccccc}
\toprule
{$\eta_\lambda$} & {\textbf{AIME24}} & {\textbf{AIME25}} & {\textbf{AMC23}} & {\textbf{MATH-500}} & {\textbf{Minerva}} & {\textbf{Olympiad}} & {\textbf{Avg.}} \\
\midrule
$0.5$ & 61.5 & 43.6 & 90.5 & 94.4 & 50.0 & 71.5 & 68.6 \\
\rowcolor{cyan!10} $1.0$ (default) & 63.2 & 52.5 & 91.8 & 94.2 & 50.1 & 74.4 & 71.0 \\
$1.5$ & 63.1 & 46.0 & 94.2 & 95.7 & 52.5 & 72.9 & 70.7 \\
\bottomrule
\end{tabular}%
}
\label{tab:ablation_lambda_lr}%
\end{table*}%

\paragraph{Tolerance budget $c$.}
When $c$ is set too small, the batch-average discrepancy persistently exceeds $c$ throughout training, causing $\lambda$ to monotonically increase and saturate at its upper bound; the resulting overly strong penalty dominates the advantage and impedes the primary reward-maximization objective of RL. Conversely, when $c$ is too large, the discrepancy rarely crosses the boundary and $\lambda$ keeps decreasing toward $\lambda_{\min}$, so the penalty effectively vanishes and \ourmethod degenerates to vanilla GRPO, losing its constraint-control capability. The middle setting $c{=}0.0037$, yields the best downstream performance (Table~\ref{tab:ablation_c}).

\begin{table*}[!ht]
\centering
\caption{Ablation on the tolerance budget $c$ for \QEightB. All other settings follow the main experiments ($\lambda_0{=}0.1$, $\eta_\lambda{=}1.0$). Results are \colorbox{\mycolor}{shaded} for the default.}
\vspace{0.2cm}
\resizebox{\textwidth}{!}{
\begin{tabular}{lccccccc}
\toprule
{$c$} & {\textbf{AIME24}} & {\textbf{AIME25}} & {\textbf{AMC23}} & {\textbf{MATH-500}} & {\textbf{Minerva}} & {\textbf{Olympiad}} & {\textbf{Avg.}} \\
\midrule
$0.0030$ & 61.3 & 49.5 & 92.3 & 95.2 & 52.3 & 72.9 & 70.6 \\
$0.0035$ & 63.0 & 48.8 & 92.8 & 94.9 & 52.1 & 72.7 & 70.7 \\
\rowcolor{cyan!10} $0.0037$ (default) & 63.2 & 52.5 & 91.8 & 94.2 & 50.1 & 74.4 & 71.0 \\
$0.0040$ & 62.9 & 44.8 & 92.3 & 95.2 & 51.0 & 74.1 & 70.1 \\
\bottomrule
\end{tabular}%
}
\label{tab:ablation_c}%
\end{table*}%

\section{Conclusion}

In this paper, we reformulate LLM RL as a \emph{Discrepancy-Constrained MDP} (\texttt{DCMDP}) to tackle the long-standing train-inference discrepancy that underlies unpredictable collapses: using the token-level absolute probability difference as a black-box discrepancy signal, we inject an adaptive Lagrangian penalty into the advantage that activates only when the deviation exceeds an empirically-identified \emph{tolerance region}, thereby preserving reward-driven exploration inside the safe zone while enforcing deployment-faithful optimization outside of it; experiments on \QEightB and \QThirtyB across various benchmarks show that our method outperforms GRPO under the wildely used BF16 training setup, and the heterogeneous BF16-training + FP8-deployment regime, validating that train-inference gap can be mitigated at algorithm level.

\paragraph{Limitations.}
Our empirical scope is limited to BF16 and FP8 precisions and extending \texttt{DCMDP} to other backend combinations is left for future work. In addition, although we verify the effectiveness of our f method on large-sized models as well as advanced architecture models, e.g., MoE, there is a lack of evaluation on more open source model families and analysis on different model sizes. We will further improve the related research in the future.

\clearpage
\bibliographystyle{plainnat}  
\bibliography{ref}


\newpage
\appendix

\section{Full Algorithm}
\label{app:algorithm}

We summarize the full training procedure of \ourmethod in Algorithm~\ref{alg:algorithm}. At each iteration, the same rollout policy weights $\theta_{\mathrm{old}}$ are executed under two backends: the high-throughput inference backend $\pi_{\theta_{\mathrm{old}}}^{\mathrm{Inf}}$ (e.g., vLLM) that generates the sampled trajectories, and the high-fidelity training backend $\pi_{\theta_{\mathrm{old}}}^{\mathrm{Train}}$ (e.g., FSDP2/Megatron) that recomputes the per-token probabilities. The absolute token-level probability difference between the two backends constitutes the black-box discrepancy signal $\delta_{i,t}$, which (i) is injected into the GRPO advantage to regularize the policy toward the discrepancy tolerance region, and (ii) drives the dual gradient ascent on the Lagrangian multiplier $\lambda$. The resulting update rule requires only a single extra forward pass per step on the training backend beyond the standard GRPO pipeline, and introduces no additional learnable parameters besides the scalar $\lambda$.

\begin{algorithm}[H]
\caption{\ourmethod: Discrepancy-Constrained Group Relative Policy Optimization.}
\SetAlgoLined
\KwIn{Initial policy $\theta_0$; initial multiplier $\lambda_0$; tolerance budget $c$; dual learning rate $\eta_\lambda$; multiplier range $[\lambda_{\min},\lambda_{\max}]$; clipping range $(\epsilon_{\text{low}},\epsilon_{\text{high}})$; group size $G$; total steps $K$.}
\KwOut{Trained policy parameters $\theta_K$.}
\For{$k = 0, 1, \dots, K-1$}{
  Sample a prompt batch $\{q_b\}_{b=1}^{B}\sim P(Q)$\;
  \textbf{Rollout (inference backend):} for each $q_b$, sample $G$ responses $\{o_{b,i}\}_{i=1}^{G}\!\sim\!\pi_{\theta_{\mathrm{old}}}^{\mathrm{Inf}}(\cdot\mid q_b)$ and record per-token probabilities $\pi_{\theta_{\mathrm{old}}}^{\mathrm{Inf}}(o_{b,i,t}\mid q_b,o_{b,i,<t})$\;
  \textbf{Recompute (training backend):} run a forward pass on the same tokens to obtain $\pi_{\theta_{\mathrm{old}}}^{\mathrm{Train}}(o_{b,i,t}\mid q_b,o_{b,i,<t})$\;
  \textbf{Discrepancy:} $\delta_{b,i,t} \leftarrow \bigl|\pi_{\theta_{\mathrm{old}}}^{\mathrm{Train}}(o_{b,i,t}\mid q_b,o_{b,i,<t}) - \pi_{\theta_{\mathrm{old}}}^{\mathrm{Inf}}(o_{b,i,t}\mid q_b,o_{b,i,<t})\bigr|$\;
  \textbf{Rewards \& GRPO advantage:} compute $r_{b,i}^{\mathrm{task}}$ via the verifiable verifier and
  $\hat A_{b,i}=\dfrac{r_{b,i}^{\mathrm{task}}-\operatorname{mean}(\{r_{b,i}^{\mathrm{task}}\}_{i=1}^{G})}{\operatorname{std}(\{r_{b,i}^{\mathrm{task}}\}_{i=1}^{G})}$\;
  \textbf{Discrepancy-penalized advantage:} $\hat A_{b,i,t}^{\texttt{DCMDP}} \leftarrow \hat A_{b,i} - \lambda_k\cdot \delta_{b,i,t}$\;
  \textbf{Policy update (primal):} perform a GRPO clipped surrogate step on $\theta$ using $\hat A^{\texttt{DCMDP}}$:
  $$\theta_{k+1}\leftarrow \theta_k + \nabla_\theta\!\!\left[\tfrac{1}{B G}\!\!\sum_{b,i}\tfrac{1}{|o_{b,i}|}\!\!\sum_{t}\min\!\left(\gamma_{b,i,t}(\theta)\hat A^{\texttt{DCMDP}}_{b,i,t},\;\operatorname{clip}(\gamma_{b,i,t}(\theta),1{-}\epsilon_{\text{low}},1{+}\epsilon_{\text{high}})\hat A^{\texttt{DCMDP}}_{b,i,t}\right)\right];$$
  \textbf{Multiplier update (dual):} $\;\bar\delta_k \leftarrow \dfrac{\sum_{b,i,t}\delta_{b,i,t}}{\sum_{b,i}|o_{b,i}|}$,\; $\lambda_{k+1}\leftarrow \Pi_{[\lambda_{\min},\lambda_{\max}]}\!\bigl(\lambda_k + \eta_\lambda(\bar\delta_k - c)\bigr)$\;
}
\label{alg:algorithm}
\end{algorithm}

\section{Optimal Policies and Toy Analysis for Alternative Discrepancy Models}
\label{app:optimal_policy_toy}

\subsection{Local Optimal Policies under Three Discrepancy Models}
\label{app:optimal_policy_proof}

At a fixed decoding state $s$, consider the train-side policy $\pi^{\mathrm{Train}}(\cdot\mid s)$ and the deployed inference policy $\pi^{\mathrm{Inf}}(\cdot\mid s)$. We analyze the local surrogate
\begin{equation}
\max_{\pi^{\mathrm{Train}}(\cdot\mid s)\in\Delta}
\left\{
\sum_a \pi^{\mathrm{Train}}(a\mid s)A(s,a)
-\lambda\,\Omega_s\big(\pi^{\mathrm{Train}}(\cdot\mid s),\pi^{\mathrm{Inf}}(\cdot\mid s)\big)
\right\},
\label{eq:app_local_surrogate}
\end{equation}
where $\Delta$ is the action simplex, $A(s,a)$ is the advantage, and $\lambda\ge 0$ is the penalty weight.

To derive closed-form or semi-closed-form optimal policies, we use the squared mismatch as the local constraint surrogate. Compared with the absolute-value form, the quadratic penalty is smooth and differentiable at zero, yields cleaner stationarity conditions, and penalizes larger discrepancies more strongly, making the geometry of the optimum analytically more tractable.

\begin{theorem}
\label{thm:three_penalties_optimal_policy}
Suppose $\pi^{\mathrm{Train}}(\cdot\mid s)$ is an interior maximizer of \eqref{eq:app_local_surrogate}. Under the quadratic discrepancy models
\begin{equation}
\Omega_s^{\mathrm{diff}}
=
\sum_a \pi^{\mathrm{Inf}}(a\mid s)\big(\pi^{\mathrm{Train}}(a\mid s)-\pi^{\mathrm{Inf}}(a\mid s)\big)^2,
\label{eq:app_omega_diff}
\end{equation}
\begin{equation}
\Omega_s^{\mathrm{ratio}}
=
\sum_a \frac{\big(\pi^{\mathrm{Train}}(a\mid s)-\pi^{\mathrm{Inf}}(a\mid s)\big)^2}{\pi^{\mathrm{Inf}}(a\mid s)},
\label{eq:app_omega_ratio}
\end{equation}
and
\begin{equation}
\Omega_s^{\mathrm{log}}
=
\sum_a \pi^{\mathrm{Inf}}(a\mid s)\left(\log\frac{\pi^{\mathrm{Train}}(a\mid s)}{\pi^{\mathrm{Inf}}(a\mid s)}\right)^2,
\label{eq:app_omega_log}
\end{equation}
the corresponding stationary policies satisfy
\begin{equation}
\pi_{\mathrm{diff}}^*(a\mid s)
=
\pi^{\mathrm{Inf}}(a\mid s)+\frac{A(s,a)-\nu(s)}{2\lambda\,\pi^{\mathrm{Inf}}(a\mid s)},
\label{eq:app_pi_diff}
\end{equation}
\begin{equation}
\pi_{\mathrm{ratio}}^*(a\mid s)
=
\pi^{\mathrm{Inf}}(a\mid s)+\frac{A(s,a)-\nu(s)}{2\lambda}\,\pi^{\mathrm{Inf}}(a\mid s),
\label{eq:app_pi_ratio}
\end{equation}
and
\begin{equation}
\pi_{\mathrm{log}}^*(a\mid s)
=
-\pi^{\mathrm{Inf}}(a\mid s)\frac{2\lambda}{A(s,a)-\nu(s)}
W\!\left(-\frac{A(s,a)-\nu(s)}{2\lambda}\right),
\label{eq:app_pi_log}
\end{equation}
where $\nu(s)$ is the normalization multiplier and $W(\cdot)$ is the Lambert-$W$ function. If any interior solution violates non-negativity, the optimum is obtained by the usual KKT projection back to the simplex.
\end{theorem}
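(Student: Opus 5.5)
The plan is to treat \eqref{eq:app_local_surrogate} at a fixed state $s$ as a finite-dimensional optimization over the simplex $\Delta$ and to read off each stationary policy from the first-order (KKT) conditions. Because $\pi^{\mathrm{Train}}(\cdot\mid s)$ is assumed to be an interior maximizer, all non-negativity constraints are inactive, so their KKT multipliers vanish. Only the equality constraint $\sum_a \pi^{\mathrm{Train}}(a\mid s)=1$ remains, and its multiplier I denote $\nu(s)$. Writing $p_a=\pi^{\mathrm{Train}}(a\mid s)$ and $q_a=\pi^{\mathrm{Inf}}(a\mid s)>0$, the Lagrangian is
\begin{equation*}
\textstyle\sum_a p_a A(s,a)-\lambda\,\Omega_s(p,q)-\nu(s)\bigl(\sum_a p_a-1\bigr).
\end{equation*}
Each of the three $\Omega_s$ is separable across actions, so stationarity decouples into one scalar equation per action:
\begin{equation*}
A(s,a)-\nu(s)=\lambda\,\partial_{p_a}\Omega_s .
\end{equation*}
Here $\nu(s)$ is whatever constant makes the resulting $p$ sum to one.

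\textbf{Difference and ratio models.} For $\Omega^{\mathrm{diff}}_s$ the partial derivative is $2q_a(p_a-q_a)$. Solving the linear equation
\begin{equation*}
A(s,a)-\nu(s)=2\lambda q_a(p_a-q_a)
\end{equation*}
gives \eqref{eq:app_pi_diff}. For $\Omega^{\mathrm{ratio}}_s$ the partial derivative is $2(p_a-q_a)/q_a$, and solving gives \eqref{eq:app_pi_ratio}. In both cases the penalty is strictly convex in $p$ (the quadratic coefficients $q_a$ and $1/q_a$ are positive) while the linear term is concave. Hence the objective is strictly concave on $\Delta$, the stationary point is the unique maximizer, and $\nu(s)$ is given explicitly by summing over $a$; for example, in the ratio case $\nu(s)=\sum_a q_aA(s,a)$.

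\textbf{Log model and the main obstacle.} Here
\begin{equation*}
\partial_{p_a}\Omega^{\mathrm{log}}_s=\frac{2q_a}{p_a}\log\frac{p_a}{q_a}.
\end{equation*}
Substituting $x=p_a/q_a$ and $\kappa=(A(s,a)-\nu(s))/(2\lambda)$, the stationarity condition becomes $\log x=\kappa x$, which is transcendental. I would set $x=e^{u}$ to get $u=\kappa e^{u}$, i.e.\ $(-u)e^{-u}=-\kappa$. By definition of the Lambert-$W$ function this gives $-u=W(-\kappa)$. Then, using $e^{-W(z)}=W(z)/z$,
\begin{equation*}
x=e^{-W(-\kappa)}=-\frac{W(-\kappa)}{\kappa},
\end{equation*}
which is exactly \eqref{eq:app_pi_log}.

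The delicate part is justifying the branch and the domain. A real solution requires $-\kappa\ge -1/e$, i.e.\ $A(s,a)-\nu(s)\le 2\lambda/e$. I would select the principal branch $W_0$, because it is the one that gives $x\to 1$ (that is, $p_a\to q_a$) as $\kappa\to 0$, with the ratio at $\kappa=0$ understood as this limit. I would also argue that this branch corresponds to the maximizer rather than a saddle point. The function $x\mapsto (\log x)^2$ is convex only on $x\le e$, so local concavity of the objective holds on the $W_0$ branch, where $x=e^{-W_0(-\kappa)}\le e$. The other branch gives $x>e$, where the second-order condition can fail.

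\textbf{Boundary case.} If $A(s,a)-\nu(s)$ exceeds the admissible range, or if any interior formula gives a negative $p_a$, the interiority hypothesis fails. The optimum is then obtained by reinstating the non-negativity multipliers and applying the standard KKT projection onto the simplex, as stated in the theorem.
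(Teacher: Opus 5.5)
Your derivation of the three stationary policies is correct and matches the paper's own argument. You use the same Lagrangian with normalization multiplier $\nu(s)$, decoupled per-action stationarity, and a Lambert-$W$ inversion of $\log x=\kappa x$; your substitution $x=e^{u}$ is equivalent to the paper's rewriting $y\exp(-\tfrac{c}{2\lambda}y)=1$. Your strict-concavity and uniqueness remarks for the difference and ratio models, and the closed form $\nu(s)=\sum_a q_aA(s,a)$ in the ratio case, are correct additions.

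What does not hold is your branch selection for the log model. The theorem, like the paper's proof, leaves the branch of $W$ unspecified, and it has to.

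\begin{itemize}
\item \textbf{Why your argument falls short.} Convexity of $(\log x)^2$ on $x\le e$ only shows that a stationary point with every $x_a<e$ is a local maximizer. It does not exclude $x_a>e$ at the maximizer. The objective need not be concave on the simplex. The second-order necessary condition only requires the diagonal Hessian, with entries $h_a=-2\lambda(1-\log x_a)/(q_ax_a^2)$, to be negative semidefinite on $\{v:\sum_a v_a=0\}$, and that tolerates one positive entry.
\item \textbf{Counterexample setup.} Take two actions with $q=(0.01,0.99)$, $A=(1,0)$ and $\lambda=0.01$. The objective tends to $-\infty$ at both ends of the simplex, so its maximizer is interior and stationary. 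With $\kappa_a=\log(x_a)/x_a$, stationarity requires $\kappa_1-\kappa_2=(A_1-A_2)/(2\lambda)=50$.
\item \textbf{Contradiction on the principal branch.} Suppose both $x_a\le e$. Then $p_1\le 0.01\,e\approx 0.027$, so $\kappa_1\le 1/e$ and $\kappa_2\ge -0.02$. This gives $\kappa_1-\kappa_2<0.4$, which contradicts the required value of $50$.
\item \textbf{Consequence.} The maximizer (numerically $p_1\approx 0.94$, $x_1\approx 94$) has $x_1=-W_{-1}(-\kappa_1)/\kappa_1$. Your prescription ``use $W_0$ for every action and choose $\nu(s)$ to normalize'' has no solution here at all. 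This low-probability, high-advantage case is exactly the regime the paper's toy analysis targets.
\end{itemize}

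The correct refinement is that each $x_a$ comes from either $W_0$ or $W_{-1}$, with at most one action on $W_{-1}$. If two entries $h_a,h_b$ were positive, the direction $v$ with $v_a=1$, $v_b=-1$ would have positive curvature. Alternatively, simply drop the branch claim, since the stated theorem does not need it.
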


\begin{proof}
Introduce the Lagrangian
\begin{equation}
\mathcal{L}_s(\pi,\nu)
=
\sum_a \pi^{\mathrm{Train}}(a\mid s)A(s,a)
-\lambda\,\Omega_s\big(\pi^{\mathrm{Train}}(\cdot\mid s),\pi^{\mathrm{Inf}}(\cdot\mid s)\big)
-\nu(s)\left(\sum_a \pi^{\mathrm{Train}}(a\mid s)-1\right).
\label{eq:app_lagrangian}
\end{equation}
For the probability-difference model \eqref{eq:app_omega_diff},
\[
\frac{\partial \mathcal{L}_s}{\partial \pi^{\mathrm{Train}}(a\mid s)}
=
A(s,a)-2\lambda\,\pi^{\mathrm{Inf}}(a\mid s)\big(\pi^{\mathrm{Train}}(a\mid s)-\pi^{\mathrm{Inf}}(a\mid s)\big)-\nu(s)=0,
\]
which yields \eqref{eq:app_pi_diff}. For the probability-ratio model \eqref{eq:app_omega_ratio},
\[
\frac{\partial \mathcal{L}_s}{\partial \pi^{\mathrm{Train}}(a\mid s)}
=
A(s,a)-2\lambda\,\frac{\pi^{\mathrm{Train}}(a\mid s)-\pi^{\mathrm{Inf}}(a\mid s)}{\pi^{\mathrm{Inf}}(a\mid s)}-\nu(s)=0,
\]
which yields \eqref{eq:app_pi_ratio}. For the log-probability model \eqref{eq:app_omega_log},
\[
\frac{\partial \mathcal{L}_s}{\partial \pi^{\mathrm{Train}}(a\mid s)}
=
A(s,a)-2\lambda\,\pi^{\mathrm{Inf}}(a\mid s)\frac{\log(\pi^{\mathrm{Train}}(a\mid s)/\pi^{\mathrm{Inf}}(a\mid s))}{\pi^{\mathrm{Train}}(a\mid s)}-\nu(s)=0.
\]
Let $c(s,a)\triangleq A(s,a)-\nu(s)$ and $y(a)\triangleq \pi^{\mathrm{Train}}(a\mid s)/\pi^{\mathrm{Inf}}(a\mid s)$. Then the stationarity condition becomes
\[
c(s,a)\,y(a)=2\lambda \log y(a),
\]
or equivalently
\[
y(a)\exp\!\left(-\frac{c(s,a)}{2\lambda}y(a)\right)=1.
\]
Applying the Lambert-$W$ function gives
\[
y(a)
=
-\frac{2\lambda}{c(s,a)}\,W\!\left(-\frac{c(s,a)}{2\lambda}\right),
\]
and substituting back yields \eqref{eq:app_pi_log}. The simplex correction follows from the KKT conditions whenever the interior solution becomes negative.
\end{proof}

The theorem makes the geometry difference explicit. The probability-difference model scales the correction by $1/\pi^{\mathrm{Inf}}(a\mid s)$ and therefore allows more aggressive movement toward low-probability but high-advantage actions. The probability-ratio model scales the correction by $\pi^{\mathrm{Inf}}(a\mid s)$, which is markedly more conservative in the tail. The log-probability model is locally close to the ratio form but remains globally non-linear, providing an intermediate geometry between these two extremes.

\subsection{A One-State Toy Experiment}
\label{app:toy_experiment}

To complement the theorem, we consider a one-state toy problem that isolates how the three discrepancy models reshape the policy under the same advantage signal. The action set is designed to cover six distinct patterns, pairing sampling probability with advantage sign: one \High Prob.\ \Pos Adv.\ action and one \High Prob.\ \Neg Adv.\ action; one \Medium Prob.\ \Pos Adv.\ action and one \Medium Prob.\ \Neg Adv.\ action; and one \Low Prob.\ \Pos Adv.\ action and one \Low Prob.\ \Neg Adv.\ action. Table~\ref{tab:app_toy_patterns} summarizes these patterns.

\begin{table}[t]
\centering
\caption{Toy action patterns for the one-state analysis.}
\label{tab:app_toy_patterns}
\begin{tabular}{lll}
\toprule
Action & Pattern & Setting \\
\midrule
$a_1$ & \High Prob.\ \Neg Adv. & $\pi^{\mathrm{Inf}}(a_1\mid s)=0.30,\; A(s,a_1)=-0.1$ \\
$a_2$ & \High Prob.\ \Pos Adv. & $\pi^{\mathrm{Inf}}(a_2\mid s)=0.30,\; A(s,a_2)=+0.1$ \\
$a_3$ & \Medium Prob.\ \Neg Adv. & $\pi^{\mathrm{Inf}}(a_3\mid s)=0.15,\; A(s,a_3)=-0.5$ \\
$a_4$ & \Medium Prob.\ \Pos Adv. & $\pi^{\mathrm{Inf}}(a_4\mid s)=0.15,\; A(s,a_4)=+0.5$ \\
$a_5$ & \Low Prob.\ \Neg Adv. & $\pi^{\mathrm{Inf}}(a_5\mid s)=0.05,\; A(s,a_5)=-1.0$ \\
$a_6$ & \Low Prob.\ \Pos Adv. & $\pi^{\mathrm{Inf}}(a_6\mid s)=0.05,\; A(s,a_6)=+1.0$ \\
\bottomrule
\end{tabular}
\end{table}

For each discrepancy family, we solve
\begin{equation}
\pi^*(\lambda)
=
\arg\max_{\pi\in\Delta}\left\{\langle \pi,A\rangle-\lambda\,\Omega(\pi,\pi^{\mathrm{Inf}})\right\},
\label{eq:app_toy_opt}
\end{equation}
and choose $\lambda$ so that the corresponding quadratic budget is active, $\Omega(\pi^*(\lambda),\pi^{\mathrm{Inf}})\approx C$. Figures~\ref{fig:app_toy_three_optima_two_side} and \ref{fig:app_toy_three_optima_one_side} visualize the resulting optima under two-sided and one-sided penalties, respectively.

Several patterns are worth emphasizing. First, the direction of reallocation is largely determined by the sign of the advantage: all three penalties consistently suppress the \Neg Adv.\ actions ($a_1$, $a_3$, $a_5$) and promote the \Pos Adv.\ actions ($a_2$, $a_4$, $a_6$). In particular, the paired \Low Prob.\ actions provide a clean control: the \Low Prob.\ \Neg Adv.\ action $a_5$ is suppressed, while the \Low Prob.\ \Pos Adv.\ action $a_6$ is promoted.

The more important difference is the magnitude of this flow. Under probability difference, moving mass onto the strongest \Low Prob.\ \Pos Adv.\ action $a_6$ is comparatively cheap, so the optimizer concentrates probability on $a_6$ most aggressively and tends to produce a sparser solution. Probability ratio spreads the reallocation more gradually across $a_2$, $a_4$, and $a_6$, preserving broader support because tail amplification is much more expensive. Log-probability difference remains intermediate: around the inference policy it behaves similarly to the ratio penalty, but once larger relative drifts become beneficial, its non-linearity allows more movement toward $a_6$ than pure ratio regularization while still avoiding the most aggressive tail-seeking behavior of probability difference.

\begin{takeawaybox}
At the same quadratic budget level, the three discrepancy models agree on the \emph{direction} of advantage-weighted reallocation, moving mass away from \Neg Adv.\ actions and toward \Pos Adv.\ actions, but they differ sharply in the \emph{magnitude} of that reallocation. When high advantage lies on low-probability actions, probability difference makes those moves cheapest, probability ratio makes them most expensive, and log-probability difference provides an intermediate compromise.
\end{takeawaybox}

\begin{figure}[t]
\centering
\includegraphics[width=\linewidth]{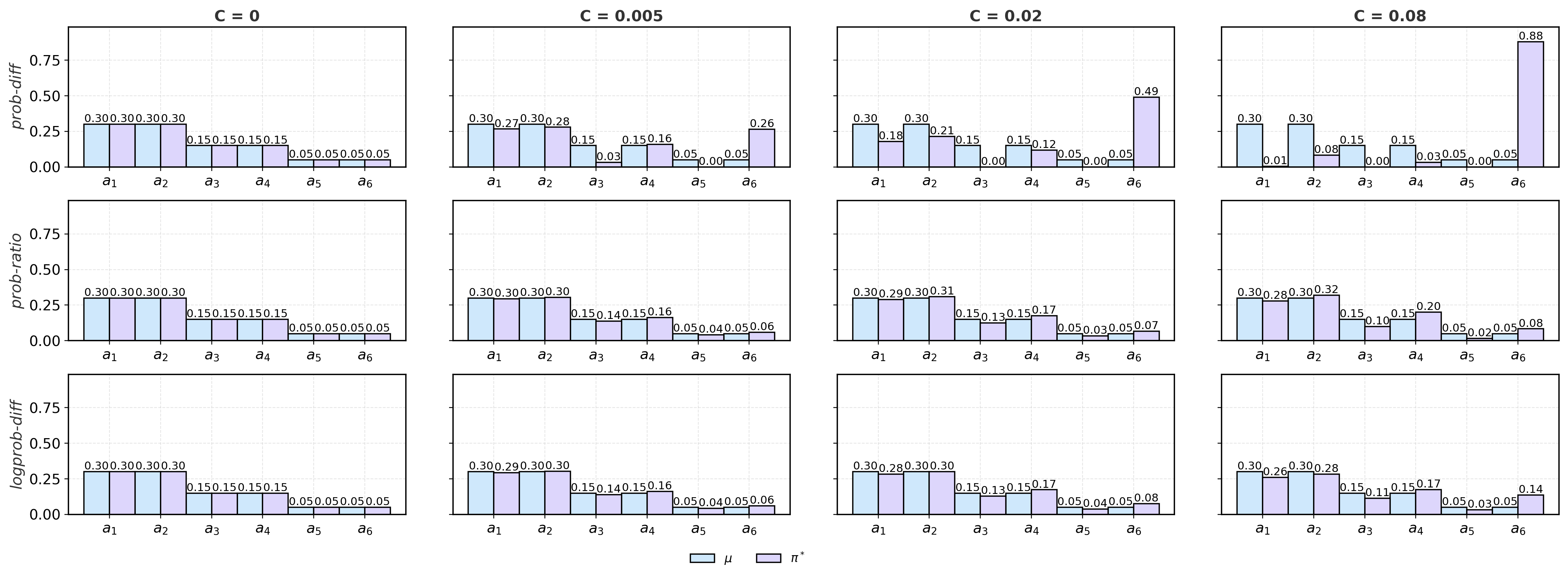}
\caption{Two-sided toy comparison under a shared quadratic mismatch budget. Columns sweep the budget level $C$; rows correspond to probability difference, probability ratio, and log-probability difference penalties.}
\label{fig:app_toy_three_optima_two_side}
\end{figure}

\begin{figure}[t]
\centering
\includegraphics[width=\linewidth]{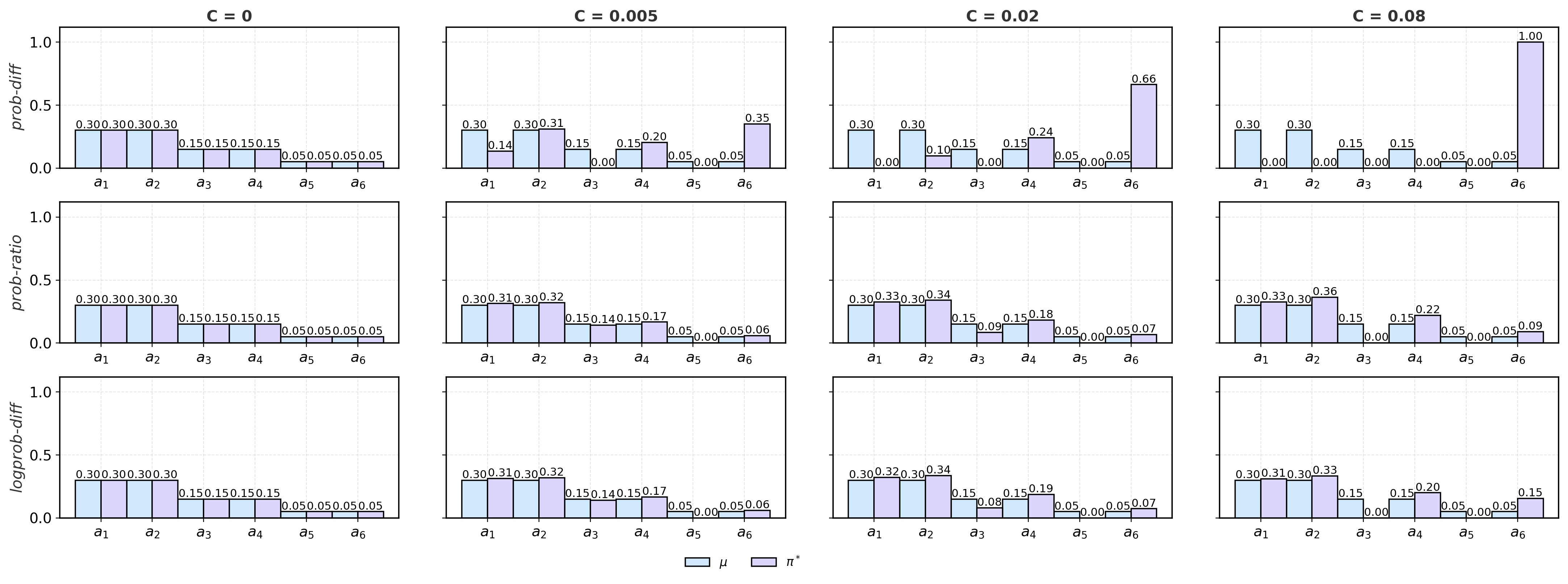}
\caption{One-sided toy comparison under a shared quadratic mismatch budget. Columns sweep the budget level $C$; rows correspond to probability difference, probability ratio, and log-probability difference penalties.}
\label{fig:app_toy_three_optima_one_side}
\end{figure}

This toy analysis highlights the practical message behind the theorem: once the advantage signal is fixed, the discrepancy model acts as a geometric filter on which reward-improving reallocations are considered cheap or expensive. Consequently, the choice among probability difference, probability ratio, and log-probability difference is not merely a matter of metric design; it directly determines how the learned policy redistributes probability mass under advantage-weighted updates.

\section{The Use of Large Language Models}
\label{app:llm_usage}

In this paper, LLMs are only used to polish the writing of some paragraphs to improve clarity and grammar. The key ideas, theoretical analysis, method design, figures, and experimental results are all from the human authors' contributions, and LLMs do not constitute a core component of the proposed methodology.

\section{Licenses}
\label{app:licenses}

The licenses of used assets in this paper are listed as follows:
\begin{itemize}
    \item MATH: MIT License
    \item AMC23: Apache-2.0 License
    \item Minerva Math: No License
    \item OlympiadBench: MIT License
    \item verl: Apache-2.0 License
\end{itemize}

\end{document}